
\documentclass[letterpaper, 9 pt, conference]{ieeeconf}  

\IEEEoverridecommandlockouts                              
\overrideIEEEmargins

\usepackage{graphics} 
\usepackage{epsfig} 
\usepackage{mathptmx} 
\usepackage{times} 
\usepackage{amsmath} 
\usepackage{amssymb}  
\usepackage{mathtools}
\usepackage{float}
\newtheorem{theorem}{Theorem}
\newcommand{\RR}{\mathcal{R}}

\title{\Large \bf
ON RANDOM WEIGHTS FOR TEXTURE GENERATION IN ONE LAYER CNNS
}

\author{Mihir Mongia,$^{1}$ Kundan Kumar,$^{2}$ Akram Erraqabi,$^{3}$ Yoshua Bengio$^{3}$ 
\\$^{1}Stanford ~ University,~^{2}IIT~ Kanpur,~^{3} Montreal~ Institute~ for~ Learning~ Algorithms$
\\$MMONGIA@STANFORD.EDU,~ KUNDAN@IITK.AC.IN,~ ERRAQABI@GMAIL.COM,~ BENGIOY@IRO.UMONTREAL.CA$
}

\begin{document}

\maketitle
\thispagestyle{empty}
\pagestyle{empty}

\begin{abstract}

\texttt{}{Recent work in the literature has shown experimentally that one can use the lower layers of a trained convolutional neural network (CNN) to model natural textures. More interestingly, it has also been experimentally shown that only one layer with random filters can also model textures although with less variability. In this paper we ask the question as to why one layer CNNs with random filters are so effective in generating textures? We theoretically show that one layer convolutional architectures (without a non-linearity) paired with the an energy function used in previous literature, can in fact preserve and modulate frequency coefficients in a manner so that random weights and pretrained weights will generate the same type of images. Based on the results of this analysis we question whether similar properties hold in the case where one uses one convolution layer with a non-linearity. We show that in the case of ReLu non-linearity there are situations where only one input will give the minimum possible energy whereas in the case of no nonlinearity, there are always infinite solutions that will give the minimum possible energy. Thus we can show that in certain situations adding a ReLu non-linearity generates less variable images.}
\\

Key Words - Texture Generation, CNN, Random Weights
\end{abstract}

\section{Introduction}
With the recent advancements in deep learning, many image processing tasks that were once thought impossible are now possible. One such task is that of generating completely random textures that visually look similar to a given sample texture. Recently Gatys et al. [1] experimentally showed that by utilizing a specific energy function that uses the first few layers of a pretrained CNN, natural textures can be effectively generated. Later Champandard [2] used a different energy function and random weights to generate textures. He et al. [3] used the same method as in Gatys et al. [1] with random weights, to generate nice textures. Around the same time, Ustyuzhaninov et al. [4] experimentally showed that less variable natural textures can be generated using just one layer of a CNN with random filters along with the same energy function. Although these methods work quite well there is no theoretical analysis why these methods (in particular the choice of energy functions and also use of random weights) would be helpful for texture generation even in the most basic settings such as in [4].

This leads us to ask a natural question. What theoretical properties lead to the fact that we can generate random textures just using a one layer CNN with random filters? 

In section 3 with a slight adjustment to the one layer CNN architecture, we show rigorously why random weights in one layer CNNs without a nonlinearity, can be used to generate random textures with the same performance as with pre-trained weights,  while also drawing connections to previous work in texture generation. In section 4, we show  how the behavior of generated images changes when one adds a ReLu non-linearity.We show that in the case of a ReLu non-linearity there are often cases where there is only one solution that gives the minimum energy (the input itself), whereas in the case with no nonlinearity there are infinite inputs that give the minimum energy. 
\section{Preliminaries}
We now review a subset of the models used by Ustyuzhaninov et al. [4] that we later analyze theoretically. In particular the subset corresponds to the models which use random weights.

 The model employs a single-layer CNN with standard rectified linear units(ReLus) and convolutions with stride one, no bias and padding (f-1)/2 where f is the filter-size (f is always an odd number). This choice of padding will ensure that the spatial dimension of the output is the same as the spatial dimension of the input. In addition, 363 filters of dimensions 11 $\times$ 11 $\times$ 3 (filter width, filter height, number of input channels) are randomly generated from a uniform distribution according to [5]. 

 For an original texture image $x$, a matrix $G^{x}$ 
  is formed,

 \begin{equation}
 G_{ij}^{x} = \sum_{m=1}^{m=M}F_{im}F_{jm}
 \end{equation}
where $F_{ij}$ corresponds to the output of the ith filter (after the nonlinearity) at location j. 

To generate new textures corresponding to an original image $x$, an energy function $E$ is developed, 

 \begin{equation}
 E(y)= ||G^{x} - G^{y}\,||_F
 \end{equation}
An image y is initialized randomly to values between 0 and 1. Then $y$ is changed according to gradient descent until $y$ is a local minimum with respect to the energy function. Note that the smallest this energy can be is zero. 

\section{Theory for using Random Weights}

In this section we consider the same model as above with two modifications. We consider a model where there is no non-linearity. We also use circular convolution rather than valid convolution as done in Saxe et al. [6]. We study the behavior of the algorithm without a ReLu non-linearity to gain insight into what is happening or not happening in the non-linear case. In addition, if the random weights in the model of Ustyuzhaninov et al. [4] were to be positive, then having a ReLu nonlinearity would be equivalent to using no non-linearity. This is because an image signal comes in values between 0 and 255. Thus the convolution of positive filters and an image signal will be positive. We use circular convolution, as in [6], rather than valid convolution because we can more easily get a theoretical analysis. Intuitively doing circular convolution in a texture image is not too different than doing valid convolution because the statistics are usually uniform across texture images. 

An important fact to note is that the output of a circular convolution of an original image and filter $F_{i}$ can be written as a matrix multiplication where the matrix is a function of the filter $F_{i}$. These matrices are called block circulant with circulant blocks (BCCB) matrices. These matrices can be diagonalized in the form

\begin{equation}
F_{i} =  UD_{i}U^\dag 
\end{equation}
where $U$ is the discrete fourier basis in two dimensions [7].Thus the output of convolving an image $x$ with a filter $F_{i}$ is 

\begin{equation}
F_{i}(x) =UD_{i}U^\dag x
\end{equation}
where $x$ has been appropriately vectorized, and an entry of $G^{x}$ can be re-expressed in the following way,

\begin{equation}
\begin{split}
&G_{ij}^{x}= x^{T} UD_{i}^\dag U^\dag UD_{j}U^\dag x
=x^{T} UD_{i}^\dag D_{j}U^\dag x
=x^{T} UD_{ij}U^\dag x\\
&~~~~~~~~~~~~~~~~~~~~~~~~~~~~= \sum_{k}|\lambda_{k}^{x}|^2 D_{ij}^{k}\\
\end{split}
\end{equation}
where $\lambda_{k}^{x}$ is the is the kth fourier coeffecient corresponding to the kth basis vector in $U$ such that

\begin{equation}
x= \sum_{k}\lambda_{k}^{x}  U(:,k)
\end{equation}
	
    Ideally we would like to generate all the images $y$ such that $G^{x} = G^{y}$. This would correspond to an energy of zero. From the equations above we can see that finding a $y$ such that $Energy(y)=0$ would come down to solving a system of linear equations. To be clear the system of equations would be 

\begin{equation}
\begin{split}
&G_{11}= \sum_{k}|\lambda_{k}^{y}|^2 D_{11}^{k},
~G_{12}= \sum_{k}|\lambda_{k}^{y}|^2 D_{12}^{k},
~G_{13}= \sum_{k}|\lambda_{k}^{y}|^2 D_{13}^{k}\\
& ~~~~~~~~~~~~~ \dots 
~~~G_{nn}= \sum_{k}|\lambda_{k}^{y}|^2 D_{nn}^{k}\\
\end{split}
\end{equation}
In matrix form the linear system of equations would look like

\begin{equation}
g= M |\lambda^{y}|^2
\end{equation}
or 
\begin{equation}
\begin{pmatrix}
G_{1,1}^{x}\\
G_{1,2}^{x} \\
\vdots   \\
G_{2,1}^{x} \\
\vdots  \\
G_{n,n}^{x} \\
\end{pmatrix}=\begin{pmatrix}
D_{1,1}^{1} & D_{1,1}^{2} & D_{1,1}^{3} & \dots &  D_{1,1}^{n}\\
D_{1,2}^{1} & D_{1,2}^{2} & D_{1,2}^{3} & \dots &  D_{1,2}^{n}\\
\vdots  & \vdots  & \vdots  & \dots  & \vdots \\
D_{2,1}^{1} & D_{2,1}^{2} & D_{2,1}^{3} & \dots &  D_{2,1}^{n}\\
\vdots  & \vdots  & \vdots  & \dots  & \vdots \\
D_{n,n}^{1} & D_{n,n}^{2} & D_{n,n}^{3} & \dots &  D_{n,n}^{n}\\

\end{pmatrix}
|\lambda^{y}|^2
\end{equation}

First we note that this system of equations is solving for the magnitude of the fourier coefficients of a particular signal. Thus this system puts no constraints on the phase of each fourier coefficient and thus, even in the case of M being full rank, there are always infinite solutions to equation 9. 

In the case that the system of equations is not full rank, then the choice of filters clearly effects what type of solutions will give energy zero. This is because the vector $|\lambda^{y}|^2$ is unique up to the rows of M. This has some nice implications. Let's define 

\begin{equation}
d_{ij} = diag(D_{i}^\dag D_{j})
\end{equation}
Since we have constraints of the form

\begin{equation}
G_{ii} = d_{ij}^{T} |\lambda^{y}|^2
\end{equation}
it implies the possible solutions of $|\lambda^{y}|^2$ must lie on the $n-1$ dimensional space where equation 11 holds.Thus $|\lambda^{y}|^2$ lies within an intersection of roughly $n^2/2$, $n-1$ dimensional spaces that have the same projection on to the frequency domain of the filters as does $|\lambda^{x}|$. Notice if this low rank system were due to random weights then the possible solutions would lie within a different intersection of $n-1$ dimensional spaces. Thus using random weights would yield distinctly different images than would using pre-trained weights in the case where the number of filters is not large enough to create a full rank matrix $M$.  

Notice that even if this system of equations is full rank, the solution to the linear system only gives us the magnitude of each fourier coefficient. Also notice that the number of equations is approximately $n^2/2$. Thus if we have a relatively small number random filters compared to the dimension of $x$, the matrix M can become full rank. If M is full rank and M is created either by random weights or pretrained weights, the solutions $y$ that yield energy zero are characterized by images for which the fourier coefficient magnitudes are the same as the fourier coefficient magnitudes of the image $x$. This is because we already know the fourier coefficients of the image $x$ already satisfy the equations from 7 and thus $|\lambda_{k}^{y}| = |\lambda_{k}^{y}|$ for any $k$. Thus in the case that there are enough random weights and pre-trained weights , the solution $y$ to the system of equations is no different than the solutions $y$ to the system of equations generated by pretrained weights.

In Figure 1 we show textures generated by simply randomizing phase of fourier coefficients. one can notice that the ridges happen in completely different places. One issue with this however is that one can see vertical and horizontal lines appearing in the generated textures. This is due to the fact that the FFT assumes images are periodic and in fact may suggest the need for nonlinearities.
\begin{figure}[h!]
\centering
\includegraphics[width=7cm]{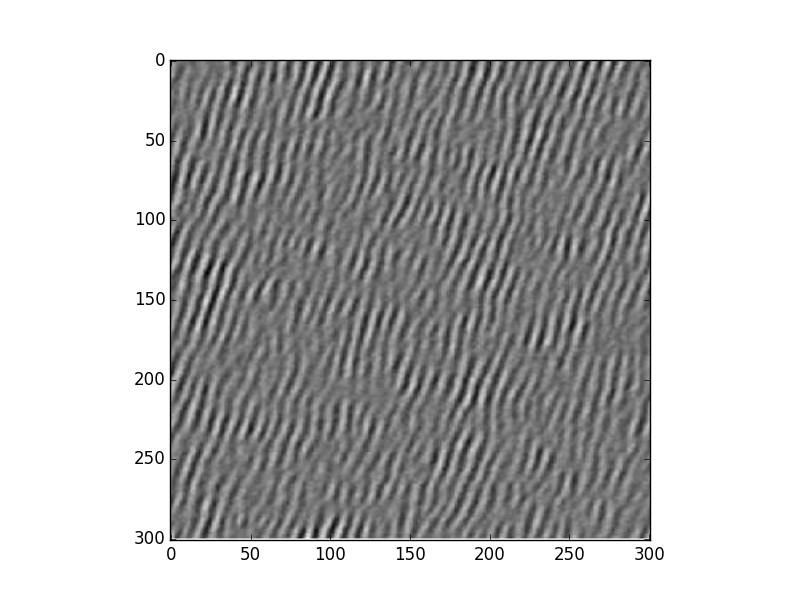}
\includegraphics[width=7cm]{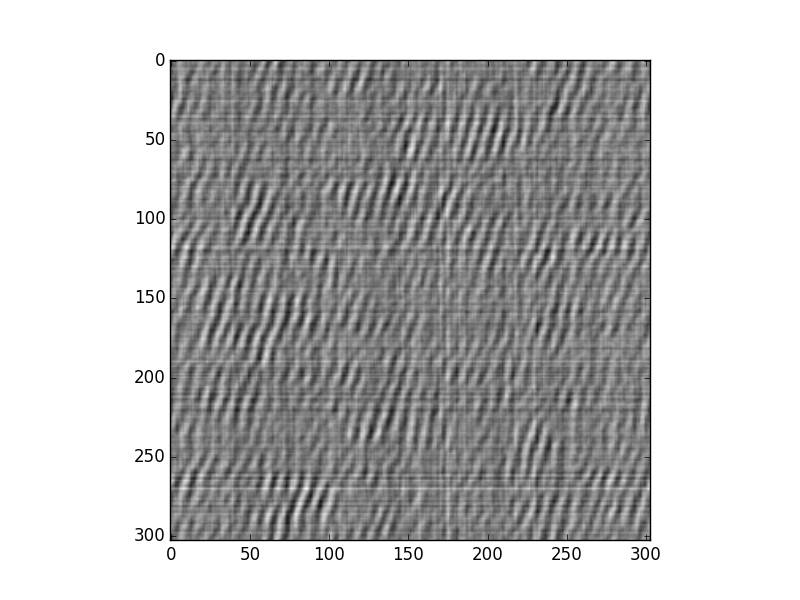}
\includegraphics[width=7cm]{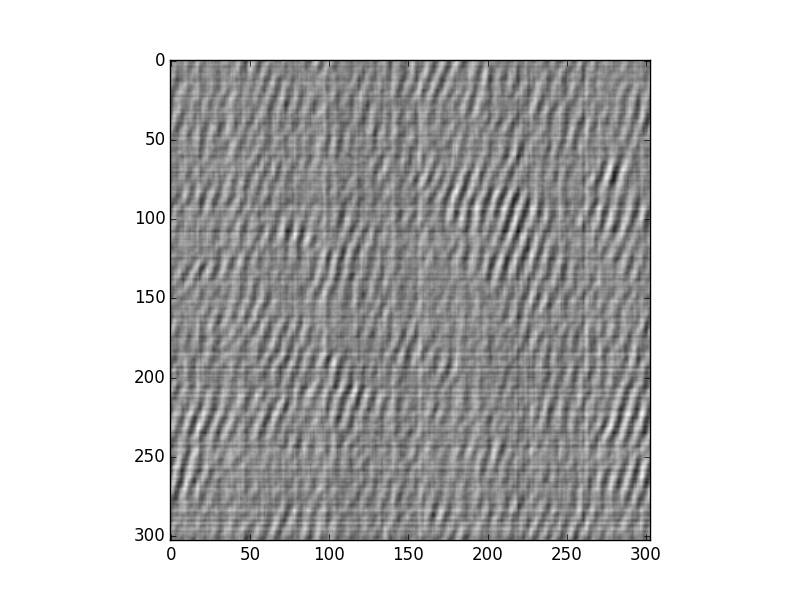}

\caption{This figure shows how randomly assigning phase to fourier coefficients can generate random textures. The top image is the original black and white texture. The bottom two images are generated textures.}
\end{figure}
\section{Unique Solutions in the ReLu case with Circular Convolution}
In the case where there is no nonlinearity, we show that in all cases there are infinite solutions that will yield energy equal to zero (The results above also hold for 1-D vector signals by simply letting $U$ be discrete fourier transform for one dimensional signals). Since in [4] a ReLu is used, the results above beg the question, does the same behavior hold in the case where there is a ReLu non-linearity. We show below that for almost all 1-D vector signals(as opposed to 2-D image signals) if one samples enough random filters that there is only one solution that yields energy zero.


First we set some notation. For an $x \in R^{n}$, let $C(x)$ denote the circulant matrix corresponding to x. In other words the first row of $C(x)$ is x. The second row is x circularly shifted by one and so on. Let the set $P_{i}(x)$ be all the vectors  $v$ such that in the matrix-vector multiplication $C(x)v$ has only the $ith$ element greater than zero. 

\begin{theorem}
Let $c_{1}$, $c_{2}$, ..., $c_{n} \in R^{n}$ be linearly independent vectors. Then the intersection of the half planes decided by these n vectors has non-zero volume. 
\end{theorem}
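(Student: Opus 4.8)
The plan is to reduce the statement to the elementary fact that an invertible linear map sends sets of positive Lebesgue measure to sets of positive Lebesgue measure. First I would fix the interpretation of the statement: the ``half plane decided by $c_i$'' is the open half space $H_i = \{x \in \mathbb{R}^n : \langle c_i, x\rangle > 0\}$ (the closed version $\langle c_i,x\rangle \ge 0$ is handled identically, since the argument below already exhibits an \emph{open} subset of the intersection). Stack the vectors $c_1,\dots,c_n$ as the rows of a matrix $C \in \mathbb{R}^{n\times n}$. By hypothesis the $c_i$ are linearly independent, so $C$ is invertible and $\det C \ne 0$.

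Next I would observe that $\bigcap_{i=1}^{n} H_i = \{x \in \mathbb{R}^n : (Cx)_i > 0 \text{ for every } i\} = C^{-1}(Q)$, where $Q = (0,\infty)^n$ is the open positive orthant. Since $Q$ is a non-empty open set it has positive (indeed infinite) volume; for instance it contains the open unit ball centered at $\mathbf{1} = (1,\dots,1)^{T}$. Applying the linear change-of-variables formula to the isomorphism $C^{-1}$ gives $\operatorname{vol}\!\big(C^{-1}(Q)\big) = |\det C|^{-1}\operatorname{vol}(Q) > 0$, which is exactly the claim.

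For a reader who prefers an explicit witness rather than the measure-theoretic identity, I would instead exhibit an interior point directly: set $x_0 = C^{-1}\mathbf{1}$, so that $\langle c_i, x_0\rangle = 1 > 0$ for every $i$. Each map $x \mapsto \langle c_i, x\rangle$ is continuous, so each $H_i$ is open, and hence the finite intersection $\bigcap_{i=1}^n H_i$ is an open set containing $x_0$. An open set containing a point contains a ball of positive radius about it, and such a ball has non-zero volume, so the intersection has non-zero volume.

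I do not expect a genuine obstacle here; the proof is short once the statement is read correctly. The only two points that need care are (i) making precise which half spaces are meant (open versus closed, and the orientation/sign convention), and (ii) justifying the step that a non-empty open set has positive Lebesgue measure, equivalently that an invertible linear transformation preserves positivity of volume. If the half spaces in the intended application are the closed ones, no change is needed, since the strict inequalities used above already produce an open — hence positive-volume — subset of the closed intersection.
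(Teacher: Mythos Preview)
Your proof is correct and follows essentially the same route as the paper: form the matrix $C$ with rows $c_i$, use its invertibility to identify the intersection of half spaces with $C^{-1}$ of an orthant, and conclude non-zero volume. The only cosmetic differences are the sign convention (the paper writes the constraints as $Cx<0$ rather than $Cx>0$) and the final justification---the paper parameterizes the region as the cone $\{\sum_k \alpha_k q_k : \alpha_k>0\}$ with $q_k=C^{-1}e_k$ and asserts that a full-dimensional cone has non-zero volume, whereas you invoke the change-of-variables formula or exhibit the open ball around $C^{-1}\mathbf{1}$; your version is, if anything, a bit more explicit.
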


\begin{proof}
Denote the half plane restrictions by 
\begin{equation}
Cx < 0 
\end{equation}
where the rows of C correspond to the vectors setting the half planes. $C$ is full rank because the rows are linearly independent. Now let  
\begin{equation}
q_{k} = C^{-1}e_{k}
\end{equation}
where $e_{k}$ are the canonical basis vectors of a euclidean vector space. Notice that as a result the $q_{k}$ will be linearly independent and thus form a basis for $R^{n}$ because there are $n$ such vectors. Thus any $x$ of the form 

\begin{equation}
x = \alpha_{1}q_{1} + \alpha_{2}q_{2} + ... \alpha_{n}q_{n}
\end{equation}
where $\alpha_{i} > 0$, satisfies the half plane restrictions. Since the region of possible $x$ is a convex cone constructed from basis vectors that span the whole euclidean space, the region of vectors x that satisfy the half-plane restrictions has non-zero volume.  
\end{proof}
Corollary: As long as $C(x)$ is full rank, $P_{i}(x)$ is a set that has nonzero volume in $R^{n}$ for any $i$. In other words, if you were to uniformly sample vectors from the unit ball, there is a finite probability that these vectors would be in the set $P_{i}(x)$ for any $i$. 

\begin{theorem}
For any $G^{x}$ there are n vectors $y$ such that $G^{x} = G^{y}$. In particular, any y that is just a circularly shifted version of x will have a gram matrix $G^{y} = G^{x}$. This holds for any kind of non-linearity used in the network.
\end{theorem}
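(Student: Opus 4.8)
The plan is to exploit the shift-equivariance of circular convolution together with the observation that the Gram matrix (1) is built only from \emph{sums over all spatial locations}, so that cyclically relabelling those locations cannot change it. Let $T$ be the cyclic shift operator on $R^{n}$, i.e.\ the circulant matrix whose action sends $(x_{1},\dots,x_{n})$ to $(x_{n},x_{1},\dots,x_{n-1})$. The first step is to record the standard fact that circular convolution commutes with $T$: in the diagonalized form (3) every convolution operator is $U D_{i} U^{\dag}$ and $T = U D_{T} U^{\dag}$ for a diagonal $D_{T}$, so $F_{i}(Tx) = U D_{i} U^{\dag} U D_{T} U^{\dag} x = U D_{i} D_{T} U^{\dag} x = U D_{T} D_{i} U^{\dag} x = T\,F_{i}(x)$. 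Because $T$ merely permutes coordinates, any pointwise nonlinearity $\phi$ obeys $\phi(Tv) = T\phi(v)$; hence the post-nonlinearity feature map of the shifted input $y = Tx$ is the spatially shifted feature map of $x$, and --- the point on which everything hinges --- with \emph{the same} shift for every filter simultaneously. In the notation of (1) this says $F^{y}_{im} = F^{x}_{i,\,m-1}$ (indices mod $n$) for all $i$ and $m$.

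Given this, the theorem is a one-line computation. For every $i,j$,
\[
G^{y}_{ij} \;=\; \sum_{m=1}^{n} F^{y}_{im} F^{y}_{jm} \;=\; \sum_{m=1}^{n} F^{x}_{i,\,m-1} F^{x}_{j,\,m-1} \;=\; \sum_{m'=1}^{n} F^{x}_{im'} F^{x}_{jm'} \;=\; G^{x}_{ij},
\]
since reindexing a sum over a full period by a cyclic shift leaves it unchanged. Iterating, $y = T^{k}x$ satisfies $G^{y} = G^{x}$ for every $k = 0,1,\dots,n-1$, which exhibits the $n$ claimed vectors. Nothing in the argument used any property of $\phi$ beyond its acting coordinatewise --- the same proof covers ReLu, any other nonlinearity, and the linear case --- so the clause ``this holds for any kind of non-linearity'' comes for free.

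The only thing that needs genuine care is the first step: one has to confirm that the padding/stride convention actually realises the layer as a \emph{circular} convolution (this is precisely why Section 3 switched to circular convolution) and that the wrap-around is consistent across channels, so that a single operator $T$ shifts every feature map at once; everything downstream is then forced. I would also add the minor caveat that the $n$ vectors $T^{k}x$ are pairwise distinct only for generic $x$ --- they collapse exactly when $x$ is periodic with a period properly dividing $n$ --- so ``there are $n$ vectors $y$'' should be read as ``generically $n$, and in all cases a full cyclic orbit of them.'' The rest of the proof is routine.
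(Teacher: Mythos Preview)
Your proof is correct and follows the same route as the paper's: establish that the post-nonlinearity feature map of a cyclically shifted input is the cyclically shifted feature map of the original (the paper does this row-by-row, using that the rows of a circulant matrix are successive shifts together with $\langle Cs(a),Cs(b)\rangle=\langle a,b\rangle$, while you do the equivalent step at the operator level via simultaneous Fourier diagonalization of $F_{i}$ and $T$ --- same fact, slightly cleaner packaging), then observe that each Gram entry is a sum over all spatial locations and hence invariant under cyclic relabelling. Your caveat that the $n$ shifts $T^{k}x$ are pairwise distinct only for aperiodic $x$ is a valid refinement the paper does not mention.
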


\begin{proof}
Consider $F^{i}$ to be the circulant matrix corresponding to i-th convolution filter. Then, output feature map corresponding to i-th convolution filter for input vector X can be written as $F^{i}X$. We'll use h(.) to represent the non-linearity that can be applied to any scalar or element-wise to any vector/matrix. Using this notation, we can say that 
\begin{equation}
\label{gram_eq}
\begin{split}
G^{X}_{ij} &= (h(F^{i}X))^\top(h(F^{j}X))
	= \sum_{k=1}^{n} h(<(F^i_{k,:},X>). h(<F^j_{k,:},X>)
\end{split}
\end{equation}.
where $F^i_{k,:}$ is the $k^{th}$ row of $F^i$.

Now consider an operator Cs such that, Cs(x) circularly shifts x by one e.g. if $x = [x_1, x_2, x_3, ..., x_n]$, then $Cs(x) = [x_n, x_1, x_2, x_3, ..., x_{n-1}]$. Here, n is the length of vector x, $<a,b>$ is the dot product between vectors a and b. Then, $F^{i}$ being circulant implies that $Cs(F^i_{k,:}) = F^i_{k+1,:} \forall k \in [1, n-1]$ and $Cs(F^i_{n,:}) = F^i_{1,:}$. Also, it is straightforward to see $<Cs(a),Cs(b)> = <a,b> \forall a,b \in R^n$. Hence, we have
\begin{equation}
\label{f1}
\begin{split}
<F^i_{k+1,:}, Cs(X)> & = <Cs(F^i_{k,:}), Cs(X)>= <F^i_{k,:}, X> \\
\implies h(<F^i_{k+1,:}, Cs(X)>) & = h(<Cs(F^i_{k,:}), Cs(X)>) ~~\forall k \in [1,n-1]
\end{split}
\end{equation}
and
\begin{equation}
\label{f2}
\begin{split}
<F^i_{1,:}, Cs(X)> & = <F^i_{n,:}, X>\\
\implies h(<F^i_{k+1,:}, Cs(X)>) & = h(<Cs(F^i_{k,:}), Cs(X)>)
\end{split}
\end{equation}
This is true for all i in range [1,N] where N is the number of filters.
Hence, from eq. \ref{gram_eq} and eq. \ref{f1}, \ref{f2}, we have 
\begin{equation}
\begin{split}
G^{Cs(X)}_{ij} & = h(<F^i_{1,:},Cs(X)>)* h(<F^j_{1,:},Cs(X)>) \\
    &  + \sum_{k=2}^{n} h(<F^i_{k,:},Cs(X)>)* h(<F^j_{k,:},Cs(X)>)\\
	& = h(<F^i_{n,:},X>)* h(<F^j_{n,:},X>) \\
    & + \sum_{k=1}^{n-1} h(<F^i_{k,:},X>)* h(<F^j_{k,:},X>)
  = G^{X}_{ij}
\end{split}
\end{equation}
Hence, $G^{X} = G^{Cs(X)} = G^{Cs(Cs(X))} .. = G^(Cs^m(X))$. Hence, we get at least as many solutions as the possible circulations of x.
\end{proof}
\begin{theorem} Using the same notation set before Theorem 1, suppose we have $n$ sets of filters. Let each set $S_{j}$ contain $n$ filters (or equivalently vectors) each of which $\in P_{i}(x)$ for a unique $i$. In others words, if $v,z \in S_{j}$, then if $v \in P_{i}(x)$ for some $i$ then $z \not\in P_{i}(x)$.  Denote the set $S_{P_{i}}$ as the set of vectors  $ \in (\cup_{j=1}^{n}S_j) \cap P_{i}(x)$. Let the vectors in each $S_{P_{i}}$ be linearly independent. Under these conditions, if $G^{x} = G^{y}$ for some vector $y$, then $y$ is equal to $x$ or some circularly shifted version of $x$.
\end{theorem}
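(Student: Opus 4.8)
The plan is to show that the ReLU (write $h(t)=\max(t,0)$) forces the feature maps of the filters in each $S_{P_{i}}$ to concentrate, for the candidate image $y$ just as for $x$, on a single spatial coordinate, and then to read $y$ off from $x$ using linear independence. I use throughout the identity from the proof of Theorem 2: the feature map produced by a filter $v$ on a signal $z$ is $h(C(z)v)$ (using $C(z)v=C(v)z$), so $G^{z}_{ab}=\langle h(C(z)a),\,h(C(z)b)\rangle$. The first step is to record the structure of $G^{x}$: if $a\in P_{i}(x)$ then by definition $C(x)a$ has its $i$-th coordinate strictly positive and all others $\le 0$, hence $h(C(x)a)=u_{a}e_{i}$ with $u_{a}:=(C(x)a)_{i}>0$; consequently $G^{x}_{ab}=u_{a}u_{b}$ when $a,b$ lie in the same $S_{P_{i}}$, and $G^{x}_{ab}=0$ when $a\in S_{P_{i}}$, $b\in S_{P_{i'}}$ with $i\neq i'$ (disjoint supports).

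Next I would assume $G^{y}=G^{x}$ and run these two facts backwards. Fix $i$ and let $W_{i}$ be the matrix whose rows are $h(C(y)a)$, $a\in S_{P_{i}}$; then $W_{i}W_{i}^{\top}=(u_{a}u_{b})_{a,b}$ is rank one with strictly positive entries, and no row vanishes since $\|h(C(y)a)\|^{2}=G^{y}_{aa}=u_{a}^{2}>0$. A rank-one Gram matrix forces all rows of $W_{i}$ parallel; a short sign check (entries of $h(C(y)a)$ nonnegative, $u_{a}$ positive) then gives $h(C(y)a)=u_{a}\widehat{w}^{(i)}$ for a common nonnegative unit vector $\widehat{w}^{(i)}$. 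The cross-block equations $0=G^{y}_{ab}=u_{a}u_{b}\langle\widehat{w}^{(i)},\widehat{w}^{(i')}\rangle$, $i\neq i'$, force $\langle\widehat{w}^{(i)},\widehat{w}^{(i')}\rangle=0$, so the supports of $\widehat{w}^{(1)},\dots,\widehat{w}^{(n)}$ are pairwise disjoint; being $n$ pairwise disjoint nonempty subsets of $\{1,\dots,n\}$, each is a singleton, say $\{\sigma(i)\}$ with $\sigma$ a permutation, whence $\widehat{w}^{(i)}=e_{\sigma(i)}$.

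To finish, translate back to $y$: the previous step says $(C(y)a)_{\sigma(i)}=u_{a}=(C(x)a)_{i}$ for every $a\in S_{P_{i}}$, i.e. $\langle r_{\sigma(i)}(y)-r_{i}(x),\,a\rangle=0$, where $r_{k}(z)$ is the $k$-th row of $C(z)$ (a circular shift of $z$). The hypothesis supplies exactly $n$ linearly independent vectors in $S_{P_{i}}$, hence a basis of $R^{n}$, so $r_{\sigma(i)}(y)=r_{i}(x)$: a circular shift of $y$ equals a circular shift of $x$, i.e. $y$ is a circular shift of $x$ (and comparing across $i$ merely forces $\sigma(i)=i-c$ for a single shift $c$, consistently with $y=Cs^{c}(x)$). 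This is exactly the claim, with $c=0$ giving $y=x$.

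I expect the crux — and the step needing the most care — to be the middle one: squeezing out of the single matrix equality $G^{y}=G^{x}$ the facts that each ReLU feature family $\{h(C(y)a)\}_{a\in S_{P_{i}}}$ collapses onto one coordinate and that these coordinates are distinct across $i$, i.e. rigorously justifying the rank-one/sign argument and the disjoint-support pigeonhole count. Once that geometric picture is in place, the linear-independence hypothesis closes the argument almost for free. A secondary point to handle carefully is the precise reading of the definition of $P_{i}(x)$ (strict positivity at coordinate $i$, nonpositivity elsewhere), since that is exactly what lets the ReLU annihilate all but one coordinate and thereby makes $G^{x}$ block diagonal with rank-one blocks.
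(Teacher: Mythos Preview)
Your argument is correct, and it takes a genuinely different route from the paper's. The paper partitions the $n^{2}$ filters by the sets $S_{j}$: since the $n$ filters in a given $S_{j}$ activate on $x$ at $n$ distinct coordinates, the corresponding $n\times n$ diagonal block of $G^{x}$ is a \emph{diagonal} matrix; matching this block in $G^{y}$ forces each $\|\RR(F^{q}_{S_{j}}y)\|_{0}=1$, and then an additional ``alignment'' step using the off-diagonal blocks is needed to pin down which coordinate each filter activates on for $y$ before the linear-independence hypothesis can be invoked. You instead partition by the sets $S_{P_{i}}$: since every filter in $S_{P_{i}}$ activates on $x$ at the \emph{same} coordinate $i$, the corresponding block of $G^{x}$ is \emph{rank one} with strictly positive entries; matching it in $G^{y}$ directly forces all the ReLU feature maps $h(C(y)a)$, $a\in S_{P_{i}}$, to be positive multiples of a single nonnegative unit vector $\widehat{w}^{(i)}$, and the cross-block orthogonality plus pigeonhole immediately gives $\widehat{w}^{(i)}=e_{\sigma(i)}$. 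Your decomposition is in a sense dual to the paper's, and it buys you a cleaner end-game: the alignment across filter families is automatic (it is encoded in the permutation $\sigma$), and the linear-independence hypothesis on $S_{P_{i}}$ --- which is exactly how the theorem is stated --- is applied directly to conclude $r_{\sigma(i)}(y)=r_{i}(x)$, without first reorganizing the filters. The paper's approach, by contrast, makes more transparent why the $S_{j}$ grouping is introduced in the statement at all.
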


\begin{proof}
We denote the ReLU activation as $\RR(x)$.
Let's consider the set of $n^2$ filters in $\cup_{j=1}^{n}S_j$. For a given $x$, we want to characterize the solutions $y$ of the equation $G^{y} = G^{x}$. This equation is actually equivalent to a system of equations that correspond to the Gram matrix components 
$$ \forall i,j \in [1,\cdots,K] ~~~~~~ G^{y}_{ij} = G^{x}_{ij}$$
We can rewrite $ G^{x}_{ij} = \RR(F^i x)^{\top} \RR(F^j x)$.
Let's consider a diagonal $n\times n$ block of $G^x$ which corresponds to the filters coming from $S_{i}$ for some $i$. Without loss of generality suppose we have ordered the entries of $G$ such that the first diagonal block corresponds to filters in $S1$, the second diagonal block corresponds to the filters in $S2$ and so on.  Without loss of generality let's consider the first block. We know that the corresponding filters in $S_{1}$ are defined such that only one component of $F^{q}_{S_{1}} x$ is positive, i.e $\forall q ~~\|\RR(F^{q}_{S_{1}} x)\|_0 = 1$.Here $q$ is indexing the $n$ filters in $S_{1}$. Due to the conditions assumed in the theorem ($S_{j}$ contains $n$ filters each of which $\in P_{i}(x)$ for a unique $i$), we also have that $\RR(F^{q1}_{S_{1}} x)^{\top} \RR(F^{q2}_{S_{1}} x) = G^{S1}_{q1q2}*\delta_{q1q2}$ where $G^{S1}_{q1q2}$ corresponds to the appropriate entry in the first diagonal block of $G^{x}$. Note that this implies the first diagonal block of $G^{x}$ is diagonal because of the $\delta_{q1q2}$ which is only non-zero when $q1 = q2$.

Given that we want $G^x = G^y$, it implies the first diagonal $n\times n$ block of $G^y$  must also be diagonal. This implies that for any $F_{S1}^{q} \in  S_{1}$, $ \|\RR(F^{q}_{S_{1}} y)\|_0 = 1$ because if it were greater than 1, then the $n\times n$ block of $G^y$ will have non diagonal non-zero elements. If $ \|\RR(F^{q}_{S_{1}} y)\|_0 = 0$, then of course the $n\times n$ block would be the zero matrix.

This means that the only equations that characterize the solutions are those that correspond to the diagonal terms of the first block of $G^x$. These equations are $ \|\RR(F^{q}_{S1} y)\|_2^2 = G^{S1}_{qq}$. Since $\|\RR(F^{q}_{S1} y)\|_0 = 1$, there are $n$ possible equations $F^{q}_{S1}[k,:]y = \sqrt{G^{S1}_{qq}}$ that could generate consistent vectors $y$, where $k$ corresponds to some row in the matrix (filter) $F_{S1}^{q}$.

Performing the same reasoning using the other sets of filters $S_{j}$, we end up with $n$ possible equations for each filter that characterize a solution y. $F^{q}_{S_{j}}[k,:] y = \sqrt{G_{qq}^{S_{j}}}$.

Note that the non-diagonal blocks of $G^x$ happen to be the inner product of the output of filters coming from $S_{i} ~ and ~ S_{j}$ for some $j$ and $i$. These blocks contain also one non zero value per row for the same reason that make the diagonal blocks diagonal matrices. The value of knowing for which pairs of $F^{q1}_{S_{i}}$ and $F^{q2}_{S_{j}}$ produce a non-zero value is that it lets us know which matrices are aligned. In other words if the entry in $G^{x}$ corresponding to $F^{q1}_{S_{i}}$ and $S^{q2}_{S_{j}}$ is greater than zero, then if $F^{q2}_{S_{j}}[k,:] y = \sqrt{G^{S_{j}}_{q2q2}}$, then $F^{q1}_{S_{i}}[k,:] y= \sqrt{G^{S_{i}}_{q1q1}}$. Notice we used the same index $k$. Suppose we have aligned the above 2 filters. Then we can align those with $n-2$ other filters coming from other $S_{j}$, by looking in the appropriate non diagonal blocks of $G^{x}$.

Since we have aligned the filters appropriately, we now have n possible system of equations since we can choose n values of k. We note under the conditions specified in Theorem 3, that these system of equations is full rank, and can thus only have one solution. One can also see that each of these system of equations just gives a circularly shifted solution of $x$. This phenomenon corresponds to theorem 2.
\end{proof}
\begin{theorem}
Suppose we generate $nk$ random filters from the infinity norm 1 ball and suppose $C(X)$ is full rank. As $k$ approaches infinity, the probability that there is unique solution to $G^{x} = G^{y}$ (up to a circular shift) approaches one.
\end{theorem}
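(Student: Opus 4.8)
The plan is to reduce everything to Theorem 3. If I can show that, with probability tending to one as $k\to\infty$, the $nk$ randomly drawn filters contain a sub-collection that can be arranged into sets $S_1,\dots,S_n$ meeting the hypotheses of Theorem 3, then I am done: on that event Theorem 3 says every $y$ with $G^y=G^x$ is a circular shift of $x$, while Theorem 2 says every circular shift of $x$ does satisfy $G^y=G^x$, so the solution set is exactly the (at most $n$) circular shifts of $x$ --- which is precisely uniqueness up to a circular shift. Thus the whole argument becomes a counting statement about where the random filters land relative to the sets $P_i(x)$.

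First I would record that each $P_i(x)$ is hit with positive probability. Since $C(X)$ is full rank, the corollary to Theorem 1 gives that $P_i(x)$ has nonzero volume in $R^n$; I would additionally note that $P_i(x)$ is a convex cone (it is cut out by the linear inequalities "$i$-th coordinate of $C(X)v$ positive, all others nonpositive", and scaling $v$ by $\alpha>0$ leaves the sign pattern of $C(X)v$ unchanged), so it meets the sampling region $[-1,1]^n$ in a set of positive Lebesgue measure. Hence a single uniformly drawn filter lies in $P_i(x)$ with probability $p>0$, and by the circular-shift symmetry relating the $P_i(x)$ this $p$ is common to all $i$. I would also note that the $P_i(x)$ are pairwise disjoint, since a vector $v$ for which $C(X)v$ has exactly one positive entry cannot be counted in two different $P_i(x)$.

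Next comes the concentration step. Letting $N_i$ be the number of the $nk$ filters landing in $P_i(x)$, $N_i$ is Binomial$(nk,p)$ with mean $nkp\to\infty$, so a Chernoff bound gives $\Pr[N_i<n]\to 0$, and a union bound over $i\in[1,n]$ yields $\Pr[N_i\ge n \text{ for all } i]\to 1$. Conditioned on that event, the filters inside a given $P_i(x)$ are independent uniform draws from $P_i(x)\cap[-1,1]^n$, an absolutely continuous law on $R^n$, so almost surely any $n$ of them are linearly independent (the $n$-tuples that are linearly dependent form a Lebesgue-null set, and there are only finitely many such $n$-tuples to rule out). I would then pick $n$ such independent filters from each bucket $P_i(x)$ and let $S_j$ consist of the $j$-th chosen filter from each of $P_1(x),\dots,P_n(x)$; by construction each $S_j$ contains exactly one filter per $P_i(x)$, the $S_j$ are disjoint, and each $S_{P_i}$ is exactly the linearly independent $n$-tuple I selected, so all hypotheses of Theorem 3 hold. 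This shows the probability of the good event tends to one, and combined with the reduction in the first paragraph it completes the proof.

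The step I expect to be the real obstacle --- or at least the one needing the most care --- is the very first one, verifying $p>0$: one must be sure that the strict-inequality region $P_i(x)$ genuinely carries positive probability under the uniform law on the cube. This is exactly where the full-rank hypothesis on $C(X)$ is essential, and it is what the corollary to Theorem 1 supplies, once one observes that $P_i(x)$ is a full-dimensional convex cone (so that it contributes positive mass to every neighborhood of the origin, hence to $[-1,1]^n$). Everything after that --- a binomial tail bound, a union bound over the $n$ buckets, and the standard general-position fact for i.i.d.\ continuous samples --- is routine bookkeeping.
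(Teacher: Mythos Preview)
Your proposal is correct and follows essentially the same route as the paper: invoke the corollary to Theorem~1 to get positive hitting probability for each $P_i(x)$, use a binomial/Chernoff tail bound to show that with probability tending to one every region collects at least $n$ filters, observe that $n$ continuous samples in a full-dimensional region are almost surely linearly independent, and then assemble the sets $S_1,\dots,S_n$ required by Theorem~3. The only cosmetic differences are that the paper partitions the $nk$ draws into $n$ independent batches of $k$ (so the counts multiply exactly) whereas you pool all draws and use a union bound, and you add the (correct but inessential) symmetry remark that the $p_i$ coincide.
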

\begin{proof}
Let $\delta_1...\delta_n$ be the probability that a random vector lands in $P_{1}(x) ... P_{n}(x)$ respectively. We know $\delta_1...\delta_n$ is greater than zero because of theorem 1. Let $k = c*n$ where $c$ is some rational number. Let $count_{i}$ be the number of times we get filters in $P_{i}(x)$ in the $ith$ set of $k$ draws.
\begin{equation}
\begin{split}
&P(count_{i}<n) = P(count_{i} < (\frac{1}{c\delta_{i}})cn\delta_{i}) \\
&= P(count_{i} < (1 -\frac{c\delta_{i}-1}{c\delta_{i}})cn\delta_{i})\\ 
&< exp(-\frac{c\delta_{i}-1}{2c\delta_{i}} \frac{c\delta_{i}-1}{c\delta_{i}} c  \delta_{i}n)
\end{split}
\end{equation}
This follows from classical chernoff bound results.
Notice the term $\frac{c\delta_{i}-1}{c\delta_{i}}$ approaches 1 as $c$ approaches infinity.

Thus 

\begin{equation}
\begin{split}
&P(count_{1}>n, count_{2} >n, \dots count_{n} > n) \\
&= \prod_{i=1}^{n} (1 - exp(-\frac{c\delta_{i}-1}{2c\delta_{i}} \frac{c\delta_{i}-1}{c\delta_{i}} c  \delta_{i}n))
\end{split}
\end{equation}
which approaches 1 as c approaches infinity. In addition, given that we have sampled more than $n$ vectors that are in $P_{i}(x)$ for some $i$, if we choose n of those vectors they are going to be linearly independent with probability one since the set of linearly dependent vectors in $P_{i}(x)$ has measure zero.Thus as $c$ approaches infinity we have the conditions noted in theorem 3 with probability one. One can find the sets $S_{1}, S_{2}, \dots S_{n}$ through exhaustive search of the Gram Matrix and then execute the proof for theorem 3 and come to the conclusion that if $G^{x} = G^{y}$ for some vector $y$, then $y$ is equal to $x$ or some circularly shifted version of $x$. 
\end{proof}

\section{Connections to Previous Work}
Galerne et al. [8] show that an algorithm called Random Phase Noise (RPN) is effective at generating realistic textures of a certain class. In the most basic version of their algorithm, the authors take texture like images and generate new textures simply by randomizing the phase of the fourier coefficients.As long as there are enough filters to create a full rank system, the slightly modified model in section 3 is mathematically equivalent to randomizing phase. Thus we expect that even our modified model(with no nonlinearity) should produce good random textures, which is in fact what we observe in Figure 1.  The authors note the same horizontal and vertical line artifacts with this algorithm as we do in our experiments and thus do some extra image processing to get textures that do not have line artifacts.

 \section{Discussion}
We have shown rigorously that in a slightly simplified model compared to that of [4]   (with circular convolution and no ReLu nonlinearity), there is no need for pretrained weights. To be precise, if the number of weights is small, then the images generated with pretrained weights and random weights will be different from each other. However, as the number of weights increases, random or pretrained weights make no difference in the images generated. We show that the images generated from a large number of weights are just modified versions of the original image. Frequency coefficient magnitudes are preserved, whereas frequency coefficient phases are randomized.

Using the lack of a nonlinearity as a starting point and the fact that in [4] a Relu nonlinearity is used, we seek to see if having a ReLu would generate a set of signals with similar properties as in the no ReLu case. We show rigorously in the one-dimensional case that there are conditions under which the only signals $x^{,}$ that can be generated with the same gram matrix as the original signal $x^{*}$ are circularly shifted versions of $x^{*}$. This is distinctly different from the linear case where there would be an infinite number of solutions.

We view these results as giving a theoretical starting point to understand the contribution of nonlinearities, random weights, as well as the gram matrix energy function in texture generation. To be clear though, there is room for future work. Since the energy function is non-convex with respect to input images, input images may be caught in a local minima. We have no way to get a handle on what type of images these local minima might produce. Secondly we have not shown in full generality what happens in the Relu case. We have shown just sufficient conditions under which the solutions to a Relu model and linear model differ greatly. Thirdly our analysis does not yet account for multiple layers. On the other hand, the analysis presented above may provide guiding intuition for multiple layers since the output of each layer is just a linear convolution and ReLu of the previous layer. 

\end{document}